\def\eqref#1{equation~\ref{#1}}
\def\1{\bm{1}}
\DeclareMathAlphabet{\mathsfit}{\encodingdefault}{\sfdefault}{m}{sl}
\SetMathAlphabet{\mathsfit}{bold}{\encodingdefault}{\sfdefault}{bx}{n}
\newtheorem{defi}{Definition}
\newtheorem{example}{Example}
\newtheorem{theorem}{Theorem}
\newtheorem{lemma}{Lemma}
\title{Generalized Convergence Analysis of Tsetlin Machines: A Probabilistic Approach to Concept Learning}
\author{
  Mohamed-Bachir Belaid$^1$\\
  \and
  Jivitesh Sharma$^{1, 2}$\\
  \and
  Lei Jiao$^2$\\
  \and
  Ole-Christoffer Granmo$^2$\\
  \and
  Per-Arne Andersen$^2$\\
  \and
  Anis Yazidi$^3$\\
}
\date{%
    $^1$NILU, Climate and environmental research institute, Kjeller, Norway.\\
    $^2$Centre for AI Research, University of Agder, Grimstad, Norway.\\
     $^3$Oslo Metropolitan University, Norway.\\
     \{\emph{bbel@nilu.no, jsha@nilu.no, lei.jiao@uia.no, ole.granmo@uia.no, per.andersen@uia.no, anisy@oslomet.no}\}
}
\begin{document}

\maketitle

\begin{abstract}
Tsetlin Machines (TMs) have garnered increasing interest for their ability to learn concepts via propositional formulas and their proven efficiency across various application domains. Despite this, the convergence proof for the TMs, particularly for the AND operator (\emph{conjunction} of literals), in the generalized case (inputs greater than two bits) remains an open problem. This paper aims to fill this gap by presenting a comprehensive convergence analysis of Tsetlin automaton-based Machine Learning algorithms. We introduce a novel framework, referred to as Probabilistic Concept Learning (PCL), which simplifies the TM structure while incorporating dedicated feedback mechanisms and dedicated inclusion/exclusion probabilities for literals. Given $n$ features, PCL aims to learn a set of conjunction clauses $C_i$ each associated with a distinct inclusion probability $p_i$. Most importantly, we establish a theoretical proof confirming that, for any clause $C_k$, PCL converges to a conjunction of literals when $0.5<p_k<1$.
This result serves as a stepping stone for future research on the convergence properties of Tsetlin automaton-based learning algorithms. Our findings not only contribute to the theoretical understanding of Tsetlin Machines but also have implications for their practical application, potentially leading to more robust and interpretable machine learning models.

\end{abstract}

\section{Introduction}
Concept Learning, a mechanism to infer Boolean functions from examples, has its foundations in classical machine learning \cite{valiant1984theory,angluin1988queries,mitchell1997machine}. A modern incarnation, the \ac{TM} \cite{granmo2018tsetlin}, utilizes \acp{TA} \cite{Tsetlin1961} to generate Boolean expressions as conjunctive clauses. Contrary to the opaqueness of deep neural networks, \acp{TM} stand out for their inherent interpretability rooted in disjunctive normal form \cite{valiant1984theory}. Recent extensions to the basic \ac{TM} include architectures for convolution~\cite{granmo2019convtsetlin}, regression~\cite{abeyrathna2020nonlinear}, and other diverse variants~\cite{RaihanNIPS22,dropclause,abeyrathna2020massively,abeyrathna2023building}. These advances have found relevance in areas like sentiment analysis~\cite{rohan2021AAAI} and novelty detection~\cite{bhattarai2021word}.

Convergence analysis of \acp{TM} reveals proven behavior for 1-bit~\cite{zhang2020convergence} and 2-bit cases~\cite{jiao2021convergenceAND,jiao2021convergence}. However, general convergence, especially with more input bits, poses significant challenges. The crux of the issue stems from the clause-based interdependence of literals in the learning mechanism of \acp{TM}. Essentially, the feedback to a literal is influenced by other literals in the same clause. This interdependency, combined with vast potential combinations of literals, compounds the difficulty in a general proof for \acp{TM}.

Addressing this, our work introduces Probabilistic inclusion of literals for Concept Learning (PCL), an innovative \ac{TM} variant. PCL's design allows literals to be updated and analyzed independently, contrasting starkly with standard \ac{TM} behavior. This is achieved by tweaking feedback tables to exclude clause values during training and omitting the inaction transition. Additionally, PCL employs dedicated inclusion probabilities for clauses to diversify the learned patterns. Most importantly, we provide evidence that PCL clauses, under certain preconditions, can converge to any intended conjunction of literals. Our assertions are bolstered by experimental results to confirm the theoretical finding. 

\section{Tsetlin Machine}

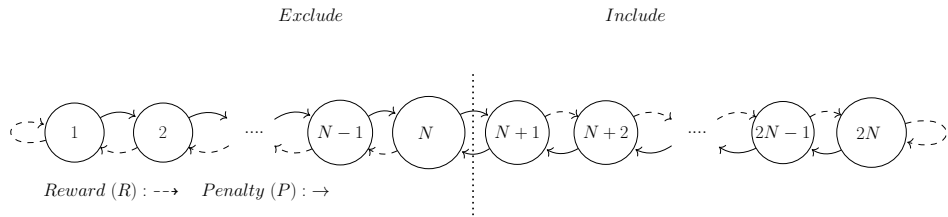
\begin{figure*}
\centering
\resizebox{0.8\textwidth}{!}{
\begin{minipage}{1\textwidth}
\begin{tikzpicture}[node distance = .35cm, font=\Huge]
    \tikzstyle{every node}=[scale=0.35]
    \node[state] (A) at (0,2) {~~~~1~~~~~};
    \node[state] (B) at (1.5,2) {~~~~2~~~~~};
    
    \node[state,draw=white] (M) at (3,2) {~~~$....$~~~};
    
    \node[state] (C) at (4.5,2) {$~N-1~$};
    \node[state] (D) at (6,2) {$~~~~N~~~~~$};
    
    \node[state] (E) at (7.5,2) {$~N+1~$};
    \node[state] (F) at (9,2) {$~N+2~$};
    
    \node[state,draw=white] (G) at (10.5,2) {~~~$....$~~~};
    
    \node[state] (H) at (12,2) {$2N-1$};
    \node[state] (I) at (13.5,2) {$~~~2N~~~~$};

    \node[thick] at (4,4) {$Exclude$};
    \node[thick] at (9.5,4) {$Include$};
    
    \node[thick] at (1.9,1) {$Reward~(R):~\dashrightarrow$~~~~$Penalty~(P):~\rightarrow$};

    \draw[every loop]
    (A) edge[bend left] node [scale=1.2, above=0.1 of B]{} (B)
    (B) edge[bend left] node  [scale=1.2, above=0.1 of M] {} (M)
    (M) edge[bend left] node  [scale=1.2, above=0.1 of C] {} (C)
    (C) edge[bend left] node [scale=1.2, above=0.1 of D] {} (D)
    (D) edge[bend left] node  [scale=1.2, above=0.1 of E] {} (E);

    \draw[every loop]
    (I) edge[bend left] node [scale=1.2, below=0.1 of H] {} (H)
    (H) edge[bend left] node  [scale=1.2, below=0.1 of G] {} (G)
    (G) edge[bend left] node [scale=1.2, below=0.1 of F] {} (F)
    (F) edge[bend left] node  [scale=1.2, below=0.1 of E] {} (E)
    (E) edge[bend left] node  [scale=1.2, below=0.1 of D] {} (D);

    
    \draw[dashed,->]
    (B) edge[bend left] node  [scale=1.2, above=0.1 of A] {} (A)
    (M) edge[bend left] node [scale=1.2, above=0.1 of B] {} (B)
    (C) edge[bend left] node [scale=1.2, above=0.1 of M] {} (M)
    (D) edge[bend left] node [scale=1.2, above=0.1 of C] {} (C)
    (A) edge[loop left] node [scale=1.2, below=0.1 of D] {} (D);
    
    \draw[dashed,->]

    (H) edge[bend left ] node [scale=1.2, below=0.1 of I] {} (I)
    (G) edge[bend left] node  [scale=1.2, below=0.1 of H] {} (H)
    (F) edge[bend left] node  [scale=1.2, below=0.1 of G] {} (G)
    (E) edge[bend left ] node [scale=1.2, below=0.1 of F] {} (F)
    (I) edge[loop right] node [scale=1.2, below=0.1 of E] {} (E);
    
      \draw[dotted, thick] (6.75,0.6) -- (6.75,3);

\end{tikzpicture}
\end{minipage}
}
\caption{A two-action Tsetlin automaton with $2N$ states~\cite{jiao2021convergence}.}
\label{figure:TAarchitecture_basic}
\end{figure*}


\paragraph{Structure:} A Tsetlin Machine (TM) processes a boolean feature vector $\mathbf{x} = [x_1, \ldots, x_n] \in \{0,1\}^n$ to assign a class $\hat{y} \in \{0,1\}$. The vector defines the literal set $L = \{x_1, \ldots, x_n, \lnot x_1, \ldots, \lnot x_n\}$. TM uses subsets $L_j \subseteq L$ to generate patterns, forming conjunctive clauses:
\begin{equation}
C_j(\mathbf{x})= \bigwedge_{l_k \in L_j} l_k.
\end{equation}
A clause $C_j(\mathbf{x}) = x_1 \land \lnot x_2$, for instance, evaluates to $1$ when $x_1=1$ and $x_2=0$.

Each clause $C_j$ uses a Tsetlin Automata (TA) for every literal $l_k$. This TA determines if $l_k$ is \emph{Excluded} or \emph{Included} in $C_j$. The two-action TA is illustrated in Figure~\ref{figure:TAarchitecture_basic}, where states $1$ to $N$ perform action \emph{Exclude}, and states $N+1$ to $2N$ perform \emph{Include}. Action feedback results in a Reward or Penalty, guiding the TA's movement between actions.

With the $u$ clauses and $2n$ literals, there are $u\times2n$ TAs. Their states are organized in the matrix $A = [a_k^j] \in \{1, \ldots, 2N\}^{u\times2n}$. The function $g(\cdot)$ maps the state $a_k^j$ to actions \emph{Exclude} (for states up to $N$) and \emph{Include} (for states beyond $N$).

Connecting the TA states to clauses, we get:
\begin{equation}
C_j(\mathbf{x}) = \bigwedge_{k=1}^{2n} \left[g(a_k^j) \Rightarrow l_k\right].
\end{equation}
The \emph{imply} operator governs the \emph{Exclude}/\emph{Include} action, being $1$ if excluded, and the literal's truth value if included.

\paragraph{Classification:} Classification uses a majority vote. Odd-numbered clauses vote for $\hat{y} = 0$ and even-numbered ones for $\hat{y} = 1$. The formula is:
\begin{equation}    
\hat{y} = 0 \le \sum_{j=1,3,\ldots}^{u-1} \bigwedge_{k=1}^{2n} \left[g(a_k^j) \Rightarrow l_k\right] - \sum_{j=2,4,\ldots}^{u} \bigwedge_{k=1}^{2n} \left[g(a_k^j) \Rightarrow l_k\right].
\end{equation}

\begin{table}[ht]
\centering
\vskip 0.15in
\begin{minipage}{0.5\linewidth}
\centering
\begin{small}
\adjustbox{width=.975\linewidth}{%
\begin{tabular}{|l|l|l|l|}
    \hline
    \multirow{2}{*}{Input}&Clause & \ \ \ \ \ \ \ 1 & \ \ \ \ \ \ \ 0 \\
    &{Literal} &\ \ 1 \ \ \ \ \ \ 0 &\ \ 1 \ \ \ \ \ \ 0 \\
    \hline
    \multirow{2}{*}{Include}&P(Reward)&$\frac{s-1}{s}$\ \ \ NA & \ \ 0 \ \ \ \ \ \ 0\\ [1mm]
    Literal&P(Inaction)&$\ \ \frac{1}{s}$\ \ \ \ \ NA &$\frac{s-1}{s}$ \ $\frac{s-1}{s}$ \\ [1mm]
    &P(Penalty)& \ \ 0 \ \ \ \ \ NA& $\ \ \frac{1}{s}$ \ \ \ \ \  $\frac{1}{s}$ \\ [1mm]
    \hline
    \multirow{2}{*}{Exclude}&P(Reward)& \ \ 0 \ \ \ \ \ \ $\frac{1}{s}$ & $\ \ \frac{1}{s}$ \ \ \ \ \
    $\frac{1}{s}$ \\ [1mm]
    Literal&P(Inaction)&$ \ \ \frac{1}{s}$\ \ \ \ $\frac{s-1}{s}$  &$\frac{s-1}{s}$ \ $\frac{s-1}{s}$ \\ [1mm]
    &P(Penalty)&$\frac{s-1}{s}$ \ \ \ \ 0& \ \ 0 \ \ \ \ \ \ 0 \\ [1mm]
    \hline
\end{tabular}}
\end{small}
\captionof{table}{Type I Feedback.}
\label{table:type_i}
\end{minipage}%
\hfill
\begin{minipage}{0.5\linewidth}
\centering
\begin{small}
\adjustbox{width=.975\linewidth}{%
\begin{tabular}{|l|l|l|l|}
     \hline
    \multirow{2}{*}{Input}&Clause & \ \ \ \ \ \ \ 1 & \ \ \ \ \ \ \ 0 \\
    &{Literal} &\ \ 1 \ \ \ \ \ \ 0 &\ \ 1 \ \ \ \ \ \ 0 \\
    \hline
    \multirow{2}{*}{Include}&P(Reward)&\ \ 0 \ \ \ NA & \ \ 0 \ \ \ \ \ \ 0\\[1mm]
    Literal&P(Inaction)&1.0 \ \  NA &  1.0 \ \ \ 1.0 \\[1mm]
    &P(Penalty)&\ \ 0 \ \ \ NA & \ \ 0 \ \ \ \ \ \ 0\\[1mm]
    \hline
    \multirow{2}{*}{Exclude}&P(Reward)&\ \ 0 \ \ \ \ 0 & \ \ 0 \ \ \ \ \ \ 0\\[1mm]
    Literal&P(Inaction)&1.0 \ \ \ 0 &  1.0 \ \ \ 1.0 \\[1mm]
    &P(Penalty)&\ \ 0 \ \  1.0 & \ \ 0 \ \ \ \ \ \ 0\\[1mm]
    \hline
\end{tabular}}
\end{small}
\captionof{table}{Type II Feedback.}
\label{table:type_ii}
\end{minipage}
\end{table}

\paragraph{Learning:}
The TM adapts online using the training pair $(\mathbf{x}, y)$. Tsetlin Automata (TAs) states are either incremented or decremented based on feedback, categorized as Type I (for frequent patterns) and Type II (to enhance pattern distinction).

For Type I, clauses of positive polarity receive feedback when $y=1$ and the opposite for $y=0$. Type II, however, switches these polarities. Clause updating probability depends on vote sum, $v$, and the voting error, $\epsilon$, uses a margin $T$ to produce an ensemble effect: $P(\mathrm{Feedback}) = \frac{\epsilon}{2T}$.

Upon sampling from $P(\mathrm{Feedback})$, TA state adjustments are computed as:
\begin{equation}
    A^*_{t+1} = A_t + F^{\mathit{II}} + F^{Ia} - F^{Ib}.
    \label{eqn:learning_step_1}
\end{equation}
Here, $A_t$ contains current TA states, while $A^*_{t+1}$ is the prospective state. Feedback matrices $F^{\mathit{Ia}}$ and $F^{\mathit{Ib}}$ define Type I feedback, with zero indicating absence and one indicating presence.

Table~\ref{table:type_i} outlines two primary Type I feedback rules:
\begin{itemize}
    \item \textbf{Type Ia} occurs when both clause and literal are 1-valued, generally adjusting TA states upwards to capture patterns in $\mathbf{x}$. Its likelihood, governed by a user parameter $s$, is mainly $\frac{s-1}{s}$ but could be $1$ in special cases.
    \item \textbf{Type Ib} is triggered for 0-valued clauses or literals, adjusting states downwards to mitigate overfitting.
\end{itemize}

$F^{\mathit{II}}$ represents Type II feedback, as detailed in Table~\ref{table:type_ii}. It targets the \emph{Exclude} action to refine clauses, focusing on instances where the literal is 0-valued but its clause is 1-valued.

The ultimate state update ensures values remain between 1 and $2N$:
\begin{equation}
    A_{t+1} = \mathit{clip}\left(A^*_{t+1}, 1, 2N\right). \label{eqn:learning_step_2}
\end{equation}

\section{PCL: Probabilistic inclusion of literals for Concept Learning}
In the PCL model, following the TM approach, a Tsetlin automata, denoted as $\mathrm{TA}_j^i$, is associated with each literal $l_j$ and clause $C_i$ to decide the inclusion or exclusion of the literal $l_j$ in $C_i$. The target conjunction concept is represented by $C_T$ (e.g., $C_T = x_1 \wedge \neg x_2$). The number of literals in $C_T$ is given by $m = |C_T|$, with $m = 2$ for the given example.
We say that the literal $l_j$ satisfies a sample $e$ (also denoted by $l_j\in e$) if it equals 1, and we say that $l_j$ violates a sample $e$ (also denoted by $l_j\notin e$) otherwise.

Given positive ($e^+$) and negative ($e^-$) samples, PCL learns a \emph{disjunctive normal form} (DNF) formula: $ C_1 \vee \dots \vee C_k $
with $ C_j = \bigwedge_{i=1}^{2n} \left[ g(a_i^j) \Rightarrow l_i \right] $. In PCL, this DNF formula classifies unseen samples, i.e., $\hat{y} = C_1 \vee \dots \vee C_k$.

While both TM and PCL update the states of every $\mathrm{TA}$ using feedback from positive and negative samples, there are notable differences in the PCL approach. In PCL, the inaction transition is disabled, feedback is independent of the values of clauses during training, and instead of TM's uniform transition probabilities for each clause, PCL assigns a unique inclusion probability to each clause to diversify learned patterns.

Figure~\ref{fig:example} provides an example of the PCL architecture with two clauses, each associated with a $p_i$ value. The current TA states suggest the clauses $C_1 = \neg x_1$ and $C_2 = \neg x_1 \wedge \neg x_2$, using the DNF $C_1 \vee C_2$ to classify unseen samples. TA states are initialized randomly and updated based on sample feedback. Subsequent sections will delve into the feedback provided for each sample.

\subsection{Feedback on positive samples}
Table~\ref{table:1} details the feedback associated with a positive sample $e^+$. This feedback relies on the literal value and the current action of its corresponding TA. As an illustration, if a literal is 1 in a positive sample and the current action is 'Include', then the reward probability is denoted by $p_i$, as shown in Table~\ref{table:1}.

A notable distinction from TM is that PCL's feedback is independent of the clause's value. Instead, distinct probabilities are associated with each clause. As a result, Table~\ref{table:1} provides two columns: one for satisfied literals and another for violated literals. For instance, the positive sample $e^+(1, 0, 1, 0)$ satisfies literals $x_1$, $\neg x_2$, $x_3$, and $\neg x_4$ (having value 1) and violates literals $\neg x_1$, $x_2$, $\neg x_3$, and $x_4$ (with value 0).

When literals oppose the positive sample $e^+$ (refer to the last column of Table~\ref{table:1}), it's evident that these literals are not constituents of the target conjunction. Therefore, they are excluded with a 1.0 probability. Conversely, literals aligning with $e^+$ (as shown in the third column of Table~\ref{table:1}) don't have assured inclusion. For conjunctions with many literals, including the majority is favored. However, for those with a singular literal, the positive sample arises from one correct literal, rendering the others superfluous. In such scenarios, the inclusion of most literals is discouraged, by having a smaller $p_i$ value. Therefore, inclusion is activated based on a user-defined probability $p_i$, while exclusion relies on a probability of $1-p_i$. This is further illustrated in Example~\ref{example:1}.

\begin{table}[t]
\centering
\begin{tabular}{|c|c|c|c|} 
 \hline
 Action & Transitions & Feedback on $l_j \in e^+$ &Feedback on $\neg l_j \mid l_j \in e^+$ \\ 
 \hline
 \multirow{2}{*}{Include} & $P(Reward)$ & $p_i$ & 0 \\ 
  & $P(Penalty)$ & $1 - p_i$ & 1 \\
 \hline
 \multirow{2}{*}{Exclude} & $P(Reward)$ & $1 - p_i$ & 1 \\
  & $P(Penalty)$ & $p_i$ & 0 \\
 \hline
\end{tabular}
\caption{Feedback on a positive sample $e^+$ on clause $C_i$ with $p_i$.}
\label{table:1}
\begin{tabular}{|c|c|c|c|} 
 \hline
 Action & Transitions & Feedback on $l_j \in e^-$ &Feedback on $\neg l_j \mid l_j \in e^-$ \\ 
 \hline
 \multirow{2}{*}{Include} & $P(Reward)$ & $1 - p_i$ & $p_i$ \\ 
  & $P(Penalty)$ & $p_i$ & $1 - p_i$ \\
 \hline
  \multirow{2}{*}{Exclude}& $P(Reward)$ & $p_i$ & $1 - p_i$ \\
  & $P(Penalty)$ & $1 - p_i$ & $p_i$ \\
 \hline
\end{tabular}
\caption{Feedback on a negative samples $e^-$ on clause $C_i$ with $p_i$.}
\label{table:2}
\end{table}

\subsection{Feedback on negative samples}
Table~\ref{table:2} delineates the feedback related to a negative sample $e^-$. Literals that violate the negative sample $e^-$ are potential candidates for inclusion. This is based on the rationale that negating certain literals might rectify the sample. However, the probability of their inclusion in clause $C_i$ is dictated by $p_i$, and their probable exclusion is initiated with a likelihood of $1-p_i$ (refer to the last column of Table~\ref{table:2}).

Conversely, literals that align with the negative sample $e^-$ are considered for exclusion, postulating that the literal's presence might be causing its negative label. Yet, this potential exclusion happens with a probability of $p_i$. Their probable inclusion is initiated with a likelihood of $1-p_i$ (see the third column of Table~\ref{table:2}).


\begin{example}
Given $n = 4$, we plan to learn a target conjunction $C_T$. 
The set of possible literals are $x_1, x_2, x_3, x_4, \neg x_1, \neg x_2, \neg x_3,$ and $\neg x_4$. 
If we have a positive sample, $e^+$, and we know that $C_T$ includes exactly 4 literals then all literals satisfying $e^+$ should be included.
However, if $C_T$ only includes one literal, we should include only one literal satisfying the sample $e^+$ (1 amongst 4 i.e., 25\% of literals).
Hence, we can control, to some extent, the size of the learned clause by setting the inclusion probability $p_i$ of the clause $C_i$ (large if we want more literals, small otherwise).
\label{example:1}
\end{example}

\begin{figure}[t]
    \centering
    \includegraphics[width=1.0\textwidth]{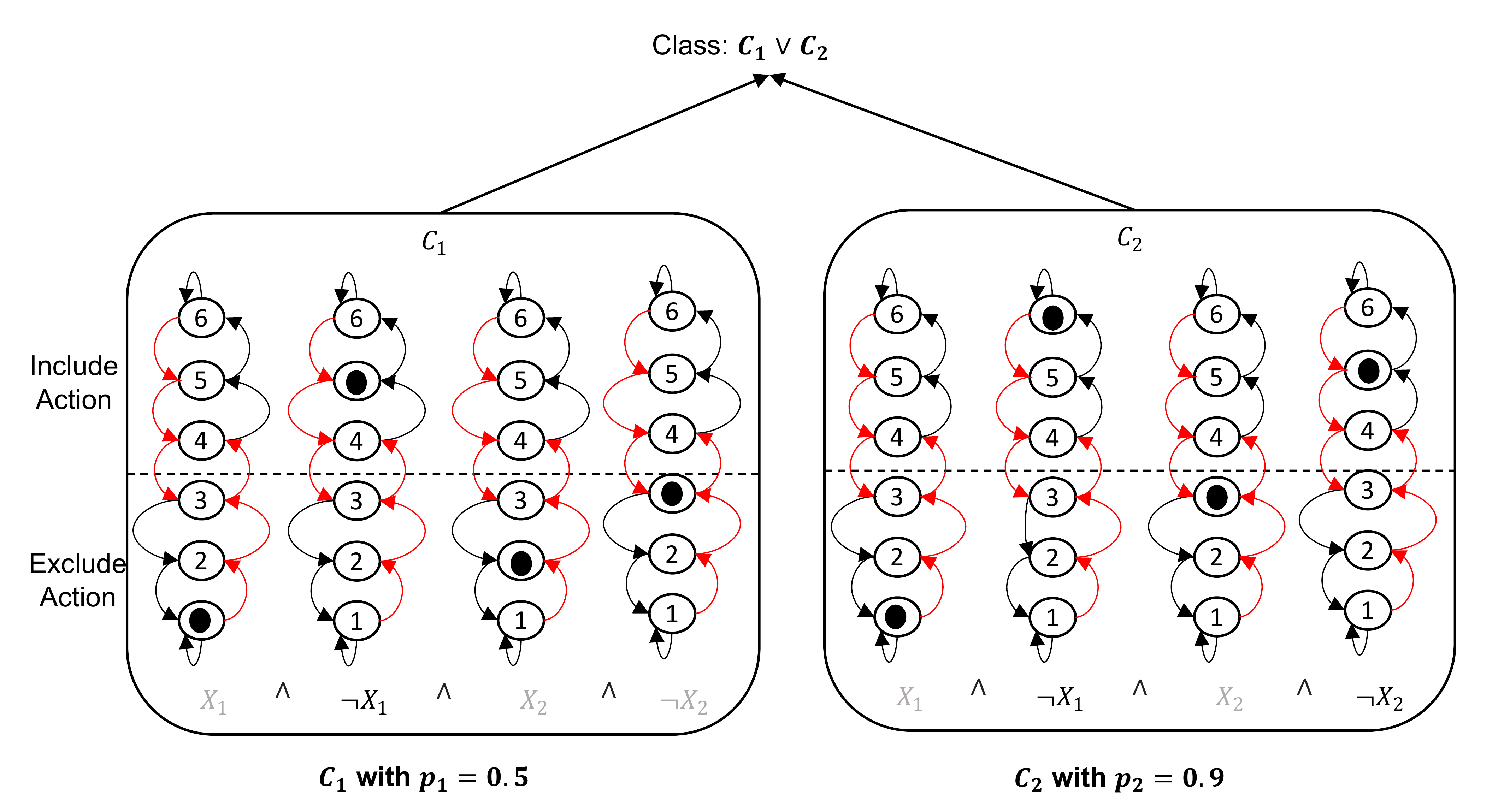}
    \caption{PCL example.}
    \label{fig:example}
\end{figure}

When contrasting PCL with the standard TM, a primary distinction arises: In PCL, TAs for all literals within a clause are updated autonomously. This individualized treatment of literals simplifies the complexity of the theoretical convergence analysis compared to the traditional TM. Even with a nuanced learning approach, PCL's convergence proof offers profound insights into the learning concept, thereby enhancing the theoretical grasp of the TM family.

\section{PCL Convergence proof}
In this section, we prove that PCL almost surely converges to any conjunction of literals in infinite time horizon.
Before that, we introduce some notations and definitions.
There are three types of literals. We denote by $L_1$ literals $l_j$ such that $l_j\in C_T$ (correct literals), by $L_2$ literals $\neg l_j$ such that $l_j\in C_T$ (negative literals) and by $L_3$ literals $l_j$ such that $l_j\notin C_T \wedge \neg l_j\notin C_T$ (irrelevant literals).
We now define the convergence to conjunction of literals.
\begin{defi}[Convergence to a conjunction of literals]
An algorithm almost surely convergences to a conjunction of literals $C_T$, if it includes every literal in $L_1$ and excludes every literal in $L_2$ and every literal in $L_3$ in infinite time horizon.
\end{defi}\label{defi:convergence}

Given all the possible $2^n$ samples, and a literal $l_j$, we distinguish four sample classes: 
\begin{itemize}
    \item[$A_1$:] Positive samples, $e^+$, such that $l_j$ is satisfied ($l_j \in e^+$).
    \item[$A_2$:] Positive samples, $e^+$, such that $l_j$ is violated ($l_j \notin e^+$).
    \item[$A_3$:] Negative samples, $e^-$, such that $l_j$ is satisfied ($l_j \in e^-$).
    \item[$A_4$:] Negative samples, $e^-$, such that $l_j$ is violated ($l_j \notin e^-$).
\end{itemize}

In Table~\ref{table:freq}, we report the number of samples for each class/literal type. We denote by $freq(i,j)$ the number of samples for a literal of type $L_i$ in class $A_j$ and by $\alpha_{i,j}$ the relative frequency i.e., $\alpha_{i,j} = \frac{freq(i,j)}{2^n}$. 
For example, $freq(1,4) = 2^{n-1}$ and $\alpha_{1,4} = \frac{2^{n-1}}{2^{n}} = 0.5$.

\begin{table}[t]
\centering
\begin{tabular}{|c|c|c|c|c|} 
 \hline
 Literal & $A_1$ & $A_2$ & $A_3$ & $A_4$\\ 
 \hline
 $L_1$ & $2^{n-m}$ & 0 & $ 2^n- 2^{n-m}- 2^{n-1}$ & $2^{n-1}$\\
 \hline
 $L_2$ & 0 & $2^{n-m}$ & $2^{n-1}$ & $2^n- 2^{n-m}- 2^{n-1}$\\
 \hline
  $L_3$ & $2^{n-m-1}$ & $2^{n-m-1}$ & $2^{n-1} - 2^{n-m-1}$ & $2^{n-1} - 2^{n-m-1}$\\
\hline
\end{tabular}
\caption{Frequency of each class w.r.t each literal type.}
\label{table:freq}
\begin{tabular}{|c|c|c|} 
 \hline
 Sample Class & Actions & Probability  \\ 
 \hline
 \multirow{2}{*}{$A_1$} & Include & $p_k$  \\ 
& Exclude &  $1- p_k$ \\
 \hline
\multirow{2}{*}{$A_2$} & Include & $0.0$  \\ 
& Exclude & $1.0$ \\
 \hline
 \multirow{2}{*}{$A_3$} & Include & $1-p_k$  \\ 
& Exclude &  $p_k$ \\
\hline
\multirow{2}{*}{$A_4$} & Include & $p_k$  \\ 
& Exclude &  $1- p_k$  \\
\hline
\end{tabular}
\caption{Summary of feedback w.r.t each sample class.}
\label{table:proof2}
\end{table}
\begin{example}
Suppose that $n = 3$ and $C_T = x_1 \wedge \neg x_2$.
We have $2^3$ possible samples, i.e., $(0, 0, 0)$, $(0, 0, 1)$, $(0, 1, 0)$, $(1, 0, 0)$, $(0, 1, 1)$, $(1, 0, 1)$, $(1, 1, 0)$, $(1, 1, 1)$. 
Following $C_T$, there are $2^{3-2} = 2$ positive samples and $2^3 - 2^{3-2} = 6$ negative samples.
Literals in $L_1$ are $x_1$ and $\neg x_2$, literals in $L_2$ are $\neg x_1$ and $x_2$ and literals in $L_3$ are $x_3$ and $\neg x_3$.
For example, given the literal $x_1$ (in $L_1$): samples in class $A_1$ (positive samples where $x_1$ equals 1) are $(1, 0, 0)$ and $(1, 0, 1)$; it is impossible to have a sample in class $A_2$, i.e, $freq(1, 2) = 0$ (we cannot have a positive sample that violates a literal in $C_T$); samples in class $A_3$ (negative samples where $x_1$ equals 1) are $(1, 1, 0)$ and $(1, 1, 1)$; samples in class $A_4$ (negative samples where $x_1$ equals 0) are $(0, 0, 0)$, $(0, 0, 1)$, $(0, 1, 0)$ and $(0, 1, 1)$.
Note that in the special case, where $m=n$, there are no literals in $L_3$.
\end{example}

Following feedback tables, Table~\ref{table:proof2} presents possible actions and probability associated to each action for each class of samples for a clause $C_k$ (with inclusion probability $p_k$).

\begin{lemma}
    For $p$ and $\alpha$ between $0$ and $1$, $\alpha\times p + (1-\alpha)\times(1-p)>0.5$ if and only if $(p>0.5 \wedge \alpha>0.5)$ or $(p<0.5 \wedge \alpha<0.5)$.
    \label{lemma:one}
\end{lemma}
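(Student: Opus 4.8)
The plan is to collapse the whole biconditional into a single sign condition on a product of two linear factors. First I would introduce the shorthand $f(\alpha,p) = \alpha p + (1-\alpha)(1-p)$ and expand it, obtaining $f(\alpha,p) = 2\alpha p - \alpha - p + 1$. The key step is then to subtract the threshold $0.5$ and verify the exact algebraic identity
\begin{equation}
f(\alpha,p) - \tfrac{1}{2} = 2\alpha p - \alpha - p + \tfrac{1}{2} = \tfrac{1}{2}\left(2\alpha - 1\right)\left(2p - 1\right),
\label{eqn:factor}
\end{equation}
which can be checked by expanding the right-hand side. This factorization is really the entire content of the lemma; everything afterward is a routine sign analysis.

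Given \eqref{eqn:factor}, the inequality $f(\alpha,p) > 0.5$ is equivalent to $(2\alpha - 1)(2p - 1) > 0$, since $\tfrac{1}{2}$ is a positive constant. I would then invoke the elementary fact that a product of two real numbers is strictly positive if and only if both factors are strictly positive or both are strictly negative. Because $2\alpha - 1 > 0 \iff \alpha > 0.5$ and $2p - 1 > 0 \iff p > 0.5$, and likewise the negative cases give $\alpha < 0.5$ and $p < 0.5$, the product condition unfolds directly into the stated disjunction $(p > 0.5 \wedge \alpha > 0.5)$ or $(p < 0.5 \wedge \alpha < 0.5)$, establishing both directions of the biconditional at once.

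I do not expect a genuine obstacle here; the only points requiring care are confirming that the factorization in \eqref{eqn:factor} leaves no stray constant term, and handling strictness at the boundary. On the latter, note that $(2\alpha-1)(2p-1) = 0$ precisely when $\alpha = 0.5$ or $p = 0.5$, so the strict inequality $f(\alpha,p) > 0.5$ correctly excludes these degenerate cases, which matches the strict inequalities appearing in the statement. The hypothesis that $\alpha$ and $p$ lie in $[0,1]$ plays no essential role beyond fixing the setting, since the identity holds for all reals; it merely ensures the quantities are interpretable as probabilities in the surrounding convergence argument.
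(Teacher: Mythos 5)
Your proof is correct: the factorization $f(\alpha,p) - \tfrac{1}{2} = \tfrac{1}{2}(2\alpha-1)(2p-1)$ checks out, and the sign analysis of the product gives both directions of the biconditional at once. Note that the paper itself states Lemma~\ref{lemma:one} without any proof, relying on it as an elementary fact inside the proof of Theorem~\ref{th:main}, so there is no paper argument to compare against; your write-up actually supplies the missing justification, and does so cleanly.
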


\begin{theorem}
Given a PCL with one clause, $C_k$, PCL will almost surely converge to the target conjunction $C_T$ in infinite time horizon if $0.5<p_k<1$. 
\label{th:main}
\end{theorem}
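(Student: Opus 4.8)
The plan is to exploit PCL's defining feature --- that within a clause each Tsetlin automaton (TA) is updated independently of the other literals --- to reduce the convergence of the whole clause $C_k$ to the separate convergence of its $2n$ per-literal automata. It then suffices to show that each automaton settles on the correct action: Include for literals in $L_1$, and Exclude for literals in $L_2$ and $L_3$, which is precisely what Definition~\ref{defi:convergence} demands.

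First I would fix one literal of type $L_i$ and model its TA as a birth-death Markov chain on the states $\{1,\dots,2N\}$, where a step ``toward Include'' raises the state and a step ``toward Exclude'' lowers it, with reflection at the two ends. The crucial preliminary observation, which I would verify directly from Tables~\ref{table:1} and~\ref{table:2}, is that the probability of a step toward Include is \emph{the same} whether the automaton currently sits in the Exclude region or the Include region: a reward deepens the current action while a penalty crosses over, and the feedback rows are arranged so that both cases produce identical net drift. This makes the walk time-homogeneous, so its one-step up-probability is the average of the per-class Include probabilities of Table~\ref{table:proof2}, weighted by the class frequencies $\alpha_{i,j}$ of Table~\ref{table:freq} under uniform sampling from the $2^n$ inputs.

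Next I would evaluate the averaged up-probability $P_I^{(i)}=\sum_{j}\alpha_{i,j}\,q_j$, where $q_j$ is the Include probability of class $A_j$. For $L_1$ the class $A_2$ is empty, so the sum collapses to $\alpha\,p_k+(1-\alpha)(1-p_k)$ with $\alpha=\alpha_{1,1}+\alpha_{1,4}=\tfrac12+2^{-m}>\tfrac12$, and Lemma~\ref{lemma:one} yields $P_I^{(1)}>\tfrac12$ exactly when $p_k>\tfrac12$, i.e.\ a drift toward Include. For $L_2$ and $L_3$ the deterministic-exclusion class $A_2$ carries positive weight, so the expression is not literally in the lemma's two-outcome form; here I would note that raising the $A_2$ Include-probability from $0$ to $p_k$ (for $L_2$) or to $1-p_k$ (for $L_3$) makes the weighted sum equal to the balanced value $\tfrac12$, so since the true value $0$ is strictly smaller whenever $0<p_k<1$, the genuine $P_I^{(2)},P_I^{(3)}$ lie strictly below $\tfrac12$, i.e.\ a drift toward Exclude. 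Hence for $0.5<p_k<1$ every literal type drifts to its correct side.

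Finally I would finish with the standard birth-death stationary analysis: with constant up-probability $q$ the chain is reversible with geometric stationary law $\pi_s\propto r^{\,s-1}$, where $r=q/(1-q)$, so when $q>\tfrac12$ (resp.\ $q<\tfrac12$) the mass concentrates on the Include (resp.\ Exclude) half, and a short geometric-series estimate shows the probability of the correct action tends to $1$ as $N\to\infty$. Together with the drift signs this gives inclusion of every $L_1$ literal and exclusion of every $L_2$ and $L_3$ literal with probability approaching one, i.e.\ convergence to $C_T$ per Definition~\ref{defi:convergence}. I expect the real work to lie not in the arithmetic but in two places: justifying the state-independence of the up-probability (which is what legitimizes both the homogeneous-walk model and the independent-literal reduction), and treating class $A_2$ for $L_2,L_3$, where Lemma~\ref{lemma:one} does not apply verbatim and the separate strict comparison above is required.
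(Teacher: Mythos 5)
Your proposal is correct, and its computational core coincides with the paper's proof: fix one literal, use the per-literal independence of PCL's updates to decouple the clause, partition the $2^n$ samples into the classes $A_1$--$A_4$ of Table~\ref{table:freq}, and weight the Include/Exclude probabilities of Table~\ref{table:proof2} by the class frequencies to obtain a drift toward Include for $L_1$ and toward Exclude for $L_2,L_3$ exactly when $0.5<p_k<1$; your $L_1$ case (with $\alpha=\alpha_{1,1}+\alpha_{1,4}>0.5$ and Lemma~\ref{lemma:one}) is literally the paper's Case~1. Where you deviate, your version is tighter. For $L_2$ and $L_3$, where the deterministic class $A_2$ breaks the two-outcome form of Lemma~\ref{lemma:one}, the paper forces its way back into that form (bounding $\alpha_{2,2}\cdot 1.0\ge \alpha_{2,2}p_k$ in Case~2, and algebraically reducing to $\alpha_{3,1}(1-p_k)+0.5$ in Case~3); your comparison against the balanced value $1/2$ --- obtained by raising $A_2$'s include-probability from $0$ to $p_k$ (resp.\ $1-p_k$) --- yields the same strict inequalities in one line and cleanly isolates which case needs $p_k>0$ and which needs $p_k<1$. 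More substantively, you supply two steps the paper asserts without argument: (i) the verification that the up-step probability is identical in the Exclude and Include halves of the automaton, which is what makes the walk time-homogeneous and legitimizes reading the drift off the tables at all; and (ii) the passage from a drift above/below $1/2$ to convergence of the action, which you settle by the geometric stationary law $\pi_s\propto r^{s-1}$ of the birth--death chain and the limit $N\to\infty$, whereas the paper jumps from $P(I(l_j)^+)>0.5>P(E(l_j)^+)$ directly to ``almost surely included in infinite time horizon'' with no random-walk argument. One caveat applies equally to both write-ups: what your stationary analysis delivers is the standard $\epsilon$-optimality statement (correct-action probability tending to $1$ as $N\to\infty$), not literal almost-sure absorption --- for finite $N$ the chain is irreducible, since none of the relevant feedback is deterministic when $0.5<p_k<1$, so no configuration is absorbing; making Definition~\ref{defi:convergence} precise (e.g.\ via infinite memory and transience of the positively drifting walk) is a gap the paper leaves open and your proposal at least confronts.
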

\begin{proof}
Given a clause $C_k$ with inclusion probability $p_k$, we prove that $C_k$ converges to the target $C_T$.
In other words, we prove that every literal in $L_1$ is included and every literal in $L_2$ or $L_3$ is excluded in infinite time horizon.

Given a clause $C_k$, we suppose that every $\mathrm{TA}_j^k$ has a memory size of $2 N$. 
Let $I_k(l_j)$ denote the action Include (when $g(a_j^k)$ is 1) and $E_k(l_j)$ the action Exclude (when $g(a_j^k)$ is 0) and let $a_j^k(t)$ denote the state of $\mathrm{TA}_j^k$ at time $t$. 
Because we have a single clause, $C_k$, the index $k$ can be omitted. 


Let $E(l_j)^+$ (resp. $I(l_j)^+$) denote the event where $E(l_j)$  is reinforced (resp. $I(l_j)$  is reinforced), meaning a transition towards the most internal state of the arm, namely, state $1$ for  action $E(l_j)$ (resp. state $2N$ for action $I(l_j)$).

If $a_j\le N$, the action is $E(l_j)$. $E(l_j)^+$ corresponds to a reward transition, meaning, the $\mathrm{TA}_j$ translates its state to $a_j-1$ unless $a_j=1$, then it will maintain the current state.
In other terms, at time $t$, $P(E(l_j)^+)$ is  
$P(a_j(t+1)=f-1 \mid  a_j(t)= f)$ if $1<f\le N$.
In case, $f=1$, $P(E(l_j)^+)$ is $P(a_j(t+1)=1 \mid a_j(t)=1)$.

If $a_j> N$, the action is $I(l_j)$. $I(l_j)^+$ corresponds to a reward transition, meaning, $\mathrm{TA}_j$ translates its state to $a_j+1$ unless $a_j=2N$, then it will maintain the current state.
In other terms, at time $t$, $P(I(l_j)^+)$ is  
$P(a_j(t+1)= f+1 \mid  a_j(t) = f)$ if $N \le f < 2N$.
In case, $a_j=2N$, $P(I(l_j)^+)$ is $P(a_j(t+1)= 2N \mid a_j(t)=2N)$.

We distinguish three cases: $l_j \in L_1$, $l_j \in L_2$ and $l_j \in L_3$.


\paragraph{Case 1: $l_j \in L_1$}

We will prove 
$P(I(l_j)^+ )>0.5>P(E(l_j)^+)$.

Following Tables~\ref{table:freq} and~\ref{table:proof2}, we have:
$$P(I(l_j)^+) = \alpha\times p_k + (1-\alpha)\times (1-p_k)\ s.t. \ \alpha = \alpha_{1,1}+\alpha_{1,4}.$$
This is, samples in classes $A_1$ and $A_4$ ($\alpha$ of the samples) suggest to include with probability $p_k$ and samples in $A_3$ (($1-\alpha$) of the samples) suggest to include with probability $(1-p_k)$.
We know that $\alpha_{1,4} = \frac{2^{n-1}}{2^{n}} = 0.5$, hence:
$$\alpha_{1,1}+\alpha_{1,4} > 0.5,\ ~i.e.,\ \alpha>0.5. $$
Supposing that $p_k>0.5$, then we have $P(I(l_j)^+)>0.5$ (following Lemma~\ref{lemma:one}).

We now compute $P(E(l_j)^+)$:
$$P(E(l_j)^+) = \alpha\times (1-p_k) + (1-\alpha)\times p_k \ s.t. \ \alpha = \alpha_{1,1}+\alpha_{1,4}.$$

This is, samples in classes $A_1$ and $A_4$ ($\alpha$ of the samples) suggest to exclude with probability $(1-p_k)$ and samples in class $A_3$ (($1-\alpha$) of the samples) suggest to include with probability $p_k$.

From before, we know that $\alpha>0.5$ and $(1-p_k)<0.5$, then $P(E(l_j)^+)<0.5$ (following Lemma~\ref{lemma:one}).
Then: 
$$P(I(l_j)^+)>0.5>P(E(l_j)^+).$$

Thus, \textbf{literals in $L_1$ will almost surely be included in infinite time horizon if $p_k > 0.5$}.

\paragraph{Case 2: $l_j \in L_2$}
We will prove 
$P(E(l_j)^+ )>0.5>P(I(l_j)^+)$.

Following Tables~\ref{table:freq} and~\ref{table:proof2}, we have:
$$P(E(l_j)^+) = \alpha_{2, 2}\times 1.0 + \alpha_{2, 3}\times p_k + \alpha_{2, 4}\times (1-p_k).$$

This is, samples in class $A_2$ ($\alpha_{2, 2}$ of the samples) suggest to exclude with probability $1.0$, samples in $A_3$ ($\alpha_{2, 3}$ of the samples) suggest to exclude with probability $p_k$ and samples in $A_4$ ($\alpha_{2, 4}$ of the samples) suggest to exclude with probability $(1-p_k)$. We know that:
$$\alpha_{2,3} = 0.5\implies \alpha_{2,2}+\alpha_{2,3}>0.5,$$
$$\alpha_{2,2}+\alpha_{2,3}+\alpha_{2,4}=1\implies \alpha_{2,4}=1-\alpha_{2,2}-\alpha_{2,3}.$$

Because $p_k$ is a probability, we know that $\alpha_{2, 2}\times 1.0 \geq \alpha_{2, 2}\times p_k$, hence:

\begin{alignat*}{1}
   &\quad \alpha_{2, 2}\times 1.0 + \alpha_{2, 3}\times p_k \geq \alpha_{2, 2}\times p_k + \alpha_{2, 3}\times p_k \\
   &\implies\quad \alpha_{2, 2}\times 1.0 + \alpha_{2, 3}\times p_k \geq (\alpha_{2, 2}+\alpha_{2, 3})\times p_k.\\
   &\implies\quad\alpha_{2, 2}\times 1.0 + \alpha_{2, 3}\times p_k + \alpha_{2, 4}\times (1-p_k) \geq (\alpha_{2, 2}+\alpha_{2, 3})\times p_k  + \alpha_{2, 4}\times (1-p_k)\\
   &\implies\quad P(E(l_j)^+) \geq (\alpha_{2, 2}+\alpha_{2, 3})\times p_k  + (1 - \alpha_{2, 2}- \alpha_{2, 3})\times (1-p_k).
\end{alignat*}

Knowing that $\alpha_{2,2}+\alpha_{2,3}>0.5$, supposing that $p_k>0.5$, and following Lemma~\ref{lemma:one}, we have: 
$$(\alpha_{2, 2}+\alpha_{2, 3})\times p_k  + (1 - \alpha_{2, 2}- \alpha_{2, 3})\times (1-p_k) > 0.5 \implies P(E(l_j)^+) > 0.5.$$

We now compute $P(I(l_j)^+)$:
$$P(I(l_j)^+) = \alpha_{2, 3}\times (1-p_k) + \alpha_{2, 4}\times p_k.$$
This is, samples in class $A_3$ ($\alpha_{2, 3}$ of the samples) suggest to include with probability $(1-p_k)$ and samples in $A_4$ ($\alpha_{2, 4}$ of the samples) suggest to include with probability $p_k$.

We know that $\alpha_{2, 3}= \frac{2^{n-1}}{2^{n}} = 0.5$ and $\alpha_{2, 4}<0.5$, then:

$$\alpha_{2, 4}\times p_k<0.5\times p_k$$
$$0.5\times (1-p_k) + \alpha_{2, 4}\times p_k < 0.5\times (1-p_k) + 0.5\times p_k,$$
$$0.5\times (1-p_k) + \alpha_{2, 4}\times p_k < 0.5\implies P(I(l_j)^+) < 0.5.$$
Then:
$$P(E(l_j)^+)>0.5>P(I(l_j)^+).$$
Thus, \textbf{literals in $L_2$ will almost surely be excluded in infinite time horizon if $p_k>0.5$}.


\paragraph{Case 3: $l_j \in L_3$}
We will prove $P(E(l_j)^+ )>0.5>P(I(l_j)^+)$.

Following Tables~\ref{table:freq} and~\ref{table:proof2}, we have:
$$P(E(l_j)^+) = (\alpha_{3, 1}+\alpha_{3, 4})\times (1-p_k) + \alpha_{3, 2}\times 1.0 + \alpha_{3, 3}\times p_k.$$

This is, samples in classes $A_1$ and $A_4$ ($\alpha_{3, 1}+\alpha_{3, 4}$ of the samples) suggest to exclude with probability $(1-p_k)$, samples in $A_2$ ($\alpha_{3, 2}$ of the samples) suggest to exclude with probability $1.0$ and samples in $A_3$ ($\alpha_{3, 3}$ of the samples) suggest to exclude with probability $p_k$.

From Table~\ref{table:freq}, we know that $\alpha_{3, 1} = \alpha_{3, 2}$ and $\alpha_{3, 3} = \alpha_{3, 4}$. Hence, by substitution:
$$P(E(l_j)^+) = (\alpha_{3, 1}+\alpha_{3, 3})\times (1-p_k) + \alpha_{3, 1} + \alpha_{3, 3}\times p_k,$$
$$P(E(l_j)^+) = \alpha_{3, 1}\times (1-p_k) +\alpha_{3, 3}\times (1-p_k) + \alpha_{3, 1} + \alpha_{3, 3}\times p_k,$$
$$P(E(l_j)^+) = \alpha_{3, 1}\times (2-p_k) +\alpha_{3, 3}.$$

We add and remove $\alpha_{3, 1}$:
$$P(E(l_j)^+) = \alpha_{3, 1}\times (2-p_k) +\alpha_{3, 3} + \alpha_{3, 1} - \alpha_{3, 1}.$$

From Table~\ref{table:freq}, we know that:
$$\alpha_{3, 3} + \alpha_{3, 1} = 0.5.$$
Then:
$$P(E(l_j)^+) =\alpha_{3, 1}\times (2-p_k) +0.5 - \alpha_{3, 1} \implies P(E(l_j)^+) =\alpha_{3, 1}\times (1-p_k) +0.5.$$

We want to prove that:
$$\alpha_{3, 1}\times (1-p_k) +0.5>0.5.$$
That is true only if:
$$\alpha_{3, 1}\times (1-p_k) >0.$$
Given that $L_3 \neq \emptyset$, we know that $\alpha_{3, 1}> 0$, then we need:
$$1-p_k >0\implies p_k < 1.$$

Hence, $P(E(l_j)^+) > 0.5$, but only if $p_k < 1$.

We now compute $P(I(l_j)^+)$:
$$P(I(l_j)^+) = (\alpha_{3, 1}+\alpha_{3, 4})\times p_k + \alpha_{3, 2}\times 0.0 + \alpha_{3, 3}\times (1-p_k).$$
$$P(I(l_j)^+) = (\alpha_{3, 1}+\alpha_{3, 4})\times p_k + \alpha_{3, 3}\times (1-p_k).$$
This is, samples in classes $A_1$ and $A_4$ ($\alpha_{3, 1}+\alpha_{3, 4}$ of the samples) suggest to include with probability $p_k$ and samples in $A_3$ ($\alpha_{3, 3}$ of the samples) suggest to include with probability $(1-p_k)$.
From Table~\ref{table:freq}, we know that $\alpha_{3, 1} + \alpha_{3, 4} = 0.5$ and $\alpha_{3, 3} < 0.5$, then:
$$0.5\times p_k +\alpha_{3, 3}\times (1-p_k)<0.5\times p_k +0.5\times (1-p_k)$$
$$0.5\times p_k +\alpha_{3, 3}\times (1-p_k)<0.5\implies P(I(l_j)^+)<0.5$$
$$P(E(l_j)^+)>0.5>P(I(l_j)^+).$$
Thus, \textbf{literals in $L_3$ will almost surely be excluded in infinite time horizon if $p_k < 1$}.

Hence, following Definition~\ref{defi:convergence}, PCL will almost surely converge to $C_T$ in infinite time horizon if $0.5 < p_k < 1$.

\end{proof}

\section{Experimental Evaluation}

Having established the theoretical convergence of PCL in Theorem~\ref{th:main}, this section aims to substantiate these findings through empirical tests. Our experimental goal can be succinctly framed as:

\begin{quote}
\textit{"Given all possible combinations as training examples (i.e., $2^n$ samples), if we run PCL (with a single clause $C_k$ and inclusion probability $p_k$) 100 times --- each time with a distinct, randomly generated target conjunction $C_T$ and varying number of epochs --- how often does PCL successfully learn the target conjunction $C_T$?"}
\end{quote}

\subsection{Experiment 1: Fixed Inclusion Probability (Figure~\ref{fig:result1}(a))}
In this experiment, the inclusion probability $p_k$ is set to a fixed value of 0.75, which falls between 0.5 and 1.0. We then observe the number of successful learnings in relation to the number of epochs for different feature counts, denoted as $n$.
We observe the following:
\begin{itemize}
    \item As the epoch count increases, PCL consistently converges to the target, achieving this 100\% of the time for larger epoch numbers.
    \item For $n=8$, PCL identifies all 100 targets by the 1,000th epoch. Furthermore, even at 600 epochs, the success rate is over 90\%.
\end{itemize}

This performance corroborates the assertions made in Theorem~\ref{th:main}, suggesting PCL's capability to converge over an infinite time horizon.

\subsection{Experiment 2: Varying Inclusion Probabilities (Figure~\ref{fig:result1}(b))}
Here, we keep the feature count fixed at $n=4$ and vary the inclusion probability $p_k$. The number of successful learnings is plotted against the number of epochs.
We observe the following:
\begin{itemize}
    \item For $p_k < 0.5$, PCL struggles to learn the targets, with success rates close to zero across epochs.
    \item On the contrary, for $0.5 < p_k < 1$, PCL shows remarkable improvement, especially as the number of epochs increases.
    \item Interestingly, at $p_k = 1$, the success rate diminishes, indicating non-convergence.
\end{itemize}

These outcomes validate our precondition for PCL's convergence: $0.5 < p_k < 1$. Overall, the findings from our experiments robustly support Theorem~\ref{th:main}, emphasizing that PCL exhibits convergence within the range $0.5 < p_k < 1$.

\begin{figure}[h]
\centering
\begin{tabular}{cc}
\includegraphics[width=0.5\textwidth]{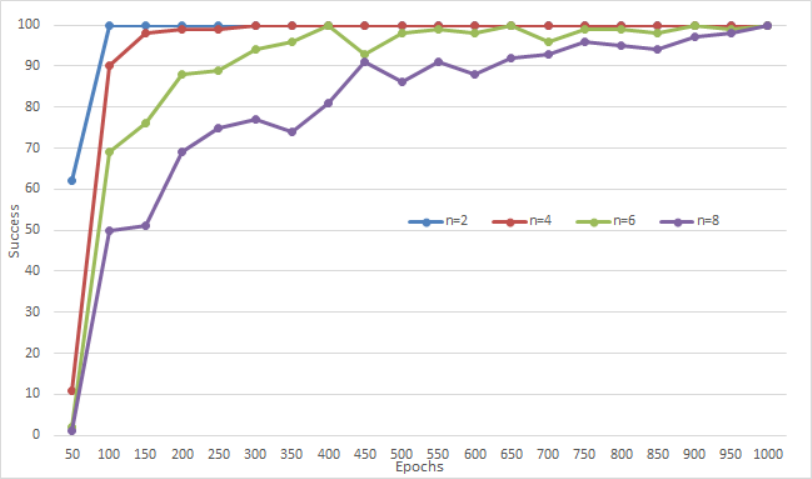} & \includegraphics[width=0.45\textwidth]{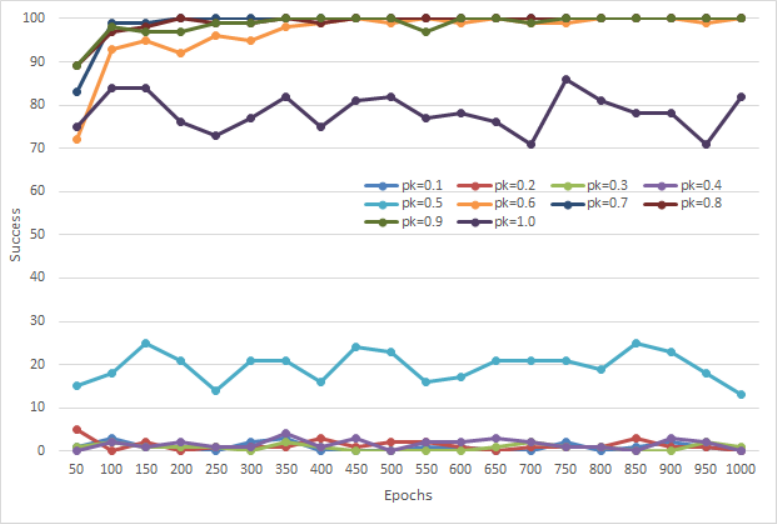} \\ 
(a) & (b) \\
\end{tabular}
\caption{(a) The number of successes for each number of features $n$ w.r.t the number of epochs with $p_k = 0.75$. (b) The number of successes w.r.t the number of epochs for $n=4$ and different $p_k$ values.}
\label{fig:result1}
\end{figure}

\section{Conclusion}
In this study, we introduced PCL, an innovative Tsetlin Agent-based method for deriving Boolean expressions. Distinctively, PCL streamlines the TM training process by integrating dedicate inclusion probability to each clause, enriching the diversity of patterns discerned. A salient feature of PCL is its proven ability to converge to any conjunction of literals over an infinite time horizon, given certain conditions—a claim corroborated by our empirical findings. This pivotal proof lays a solid foundation for the convergence analysis of the broader TM family. The implications of our theoretical insights herald potential for PCL's adaptability to real-world challenges. Looking ahead, our ambition is to enhance PCL's capabilities to cater to multi-class classification and to rigorously test its efficacy on practical applications.

\bibliography{bib}
\bibliographystyle{plain}

\end{document}